\documentclass{article}

\let\ltxcup\cup



\usepackage[final]{setup/neurips_2020}



\usepackage[utf8]{inputenc} 
\usepackage[T1]{fontenc}    
\usepackage{hyperref}       
\usepackage{url}            
\usepackage{booktabs}       
\usepackage{amsfonts}       
\usepackage{nicefrac}       
\usepackage{microtype}      

\usepackage{graphicx}
\usepackage{color}
\usepackage{xcolor}

\usepackage{amsmath}
\usepackage{amsfonts}

\usepackage{algorithm}
\usepackage{algorithmic}
\usepackage{hyperref}


\definecolor{mydarkblue}{rgb}{0,0.08,0.45}
\hypersetup{ %
    colorlinks=true,
    linkcolor=mydarkblue,
    citecolor=mydarkblue,
    filecolor=mydarkblue,
    urlcolor=mydarkblue,
}

\usepackage{mathabx}
\usepackage{amssymb}
\usepackage{amsthm}
\usepackage{bbm}
\usepackage{titlesec}
\usepackage{tikz}
\usepackage[title]{appendix}

\usepackage{scalerel}

\usepackage{enumitem}

\definecolor{darkblue}{rgb}{0,0.08,0.45}

\definecolor{gamma_red}{HTML}{D62728}
\definecolor{gamma_blue}{HTML}{1F77B4}

\newcommand{\st}{\mathbf{s}_t}
\newcommand{\stp}{\mathbf{s}_{t+1}}

\newcommand{\at}{\mathbf{a}_t}
\newcommand{\rt}{\mathbf{r}_t}
\newcommand{\atp}{\mathbf{a}_{t+1}}

\newcommand{\se}{\mathbf{s}_{e}}
\newcommand{\bs}{\mathbf{s}}
\newcommand{\ba}{\mathbf{a}}

\newcommand{\gammam}{\gamma}
\newcommand{\gammav}{\tilde{\gamma}}
\newcommand{\occupancy}{\mu}
\newcommand{\boldgamma}{\boldsymbol{\gamma}}
\newcommand{\psingle}{p}

\newcommand{\deltat}{\Delta t}



\newcommand{\model}{\mu_\theta}
\newcommand{\targmodel}{\mu_{\widebar{\theta}}}


\newcommand{\ptd}[1]{p_\text{targ}({#1} \mid \st, \at)}

\newcommand{\disc}{D_\phi}


\newcommand{\expect}[2]{\mathbb{E}_{#1} \left[ #2 \right] }


\newcommand{\fdiv}[2]{D_f({#1} \mid\mid {#2})}


\newcommand{\grad}{\nabla}

\newcommand{\indicator}[1]{\mathbbm{1}\left[#1\right]}

\newtheorem{thm}{Theorem}
\newtheorem{prop}{Proposition}
\newtheorem{lem}{Lemma}

\newcommand{\dd}[1]{\mathrm{d}#1}
\author{
    Michael Janner$^{\hspace{.02cm}1}$ ~~~~~~
    Igor Mordatch$^{\hspace{.02cm}2}$ ~~~~~~
    Sergey Levine$^{\hspace{.02cm}1\hspace{.02cm}2}$ \\
    $^1$UC Berkeley ~~~~~~
    $^2$Google Brain \\
    \texttt{\{janner, svlevine\}@eecs.berkeley.edu} ~~~~~~
    \texttt{imordatch@google.com}\\
}

\title{Generative Temporal Difference Learning \\for Infinite-Horizon Prediction}

\begin{document}

\setlength{\abovedisplayskip}{10pt}
\setlength{\belowdisplayskip}{10pt}
\titlespacing\section{0pt}{4pt plus 4pt minus 2pt}{4pt plus 2pt minus 2pt}
\titlespacing\subsection{0pt}{4pt plus 4pt minus 2pt}{4pt plus 2pt minus 2pt}

\maketitle
\begin{abstract}
We introduce the $\gamma$-model, a predictive model of environment dynamics with an infinite probabilistic horizon.
Replacing standard single-step models with $\gamma$-models
leads to generalizations of the procedures central to model-based control, including the model rollout and model-based value estimation.
The $\gamma$-model, trained with a generative reinterpretation of temporal difference learning, is a natural continuous analogue of the successor representation and a hybrid between model-free and model-based mechanisms.
Like a value function, it contains information about the long-term future; like a standard predictive model, it is independent of task reward.
We instantiate the $\gamma$-model as both a generative adversarial network and normalizing flow, discuss how its training reflects an inescapable tradeoff between training-time and testing-time compounding errors, and empirically investigate its utility for prediction and control.
\end{abstract}


\vspace{-.25cm}
\section{Introduction}
\label{sec:introduction}

The common ingredient in all of model-based reinforcement learning is the dynamics model: a function used for predicting future states.
The choice of the model's prediction horizon constitutes a delicate trade-off.
Shorter horizons make the prediction problem easier, as the near-term future increasingly begins to look like the present, but may not provide sufficient information for decision-making.
Longer horizons carry more information, but present a more difficult prediction problem, as errors accumulate rapidly when a model is applied to its own previous outputs
\citep{talvitie2014hallcuinated}.

Can we avoid choosing a prediction horizon altogether?
Value functions already do so by modeling the cumulative return over a discounted long-term future instead of an immediate reward, circumventing the need to commit to any single finite horizon.
However, value prediction folds two problems into one by entangling environment dynamics with reward structure, making value functions less readily adaptable to new tasks in known settings than their model-based counterparts.

In this work,
we propose a model that predicts over an infinite horizon with a geometrically-distributed timestep weighting (Figure~\ref{fig:teaser}).
This $\gamma$-model, named for the dependence of its probabilistic horizon on a discount factor $\gamma$, is trained with a generative analogue of temporal difference learning suitable for continuous spaces.
The $\gamma$-model bridges the gap between canonical model-based and model-free mechanisms.
Like a value function, it is policy-conditioned and contains information about the distant future; like a conventional dynamics model, it is independent of reward and may be reused for new tasks within the same environment.
The $\gamma$-model may be instantiated as both a generative adversarial network \citep{goodfellow2014gan} and a normalizing flow \citep{rezende15variational}.

The shift from standard single-step models to infinite-horizon $\gamma$-models carries several advantages:

\textbf{Constant-time prediction~~~~} Single-step models must perform an $\mathcal{O}(n)$ operation to predict $n$ steps into the future; $\gamma$-models amortize the work of predicting over extended horizons during training such that long-horizon prediction occurs with a single feedforward pass of the model.

\textbf{Generalized rollouts and value estimation~~~~}
Probabilistic prediction horizons lead to generalizations of the core procedures of model-based reinforcement learning.
For example, generalized rollouts allow for fine-tuned interpolation between training-time and testing-time compounding error.
Similarly, terminal value functions appended to truncated $\gamma$-model rollouts allow for a gradual transition between model-based and model-free value estimation.

\textbf{Omission of unnecessary information~~~~}
The predictions of a $\gamma$-model do not come paired with an associated timestep. While on the surface a limitation, we show why knowing precisely {\it when} a state will be encountered is not necessary for decision-making.
Infinite-horizon $\gamma$-model prediction selectively discards the unnecessary information from a standard model-based rollout.

\begin{figure}
\label{fig:teaser}
    \begin{flushleft}
        $~\textbf{single-step model}\!: \Delta t = 1
            \hspace{3.8cm}
        \boldsymbol{\gamma}\textbf{-model}\!: \Delta t \sim \text{Geom}(1-\gamma)$
    \end{flushleft}
    \vspace{-.2cm}
    \includegraphics[width=\linewidth]{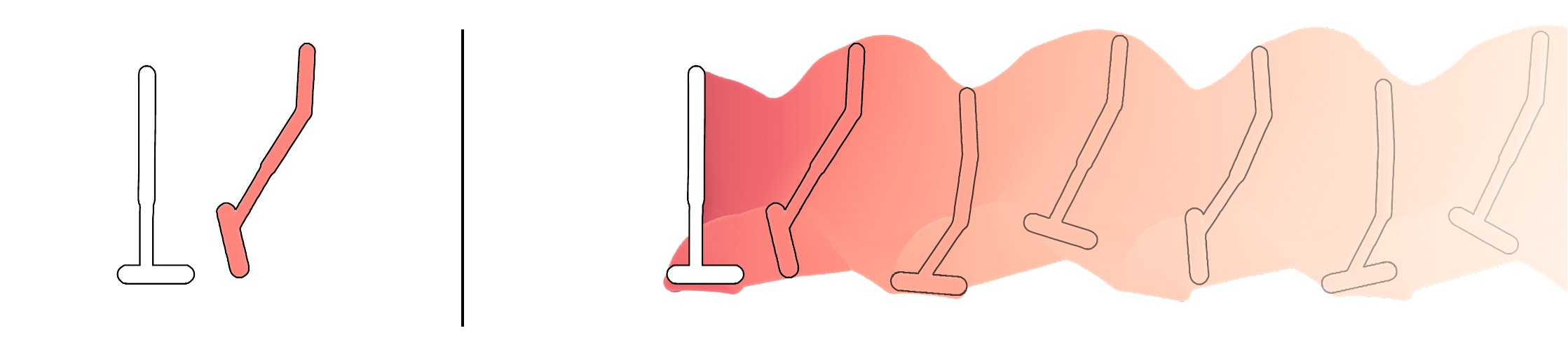}
    \vspace{-1cm}
    \begin{flushleft}
        current state \hspace{.1cm} \!\!{\color{gamma_red}prediction}
    \end{flushleft}
    \caption{Conventional predictive models trained via maximum likelihood have a horizon of one. By interpreting temporal difference learning as a training algorithm for generative models, it is possible to predict with a probabilistic horizon governed by a geometric distribution.
    In the spirit of infinite-horizon control in model-free reinforcement learning, we refer to this formulation as infinite-horizon prediction.
    \vspace{-.5cm}
    }
    \label{fig:my_label}
\end{figure}

\vspace{-.15cm}
\section{Related Work}
\vspace{-.15cm}
\label{sec:related}

The complementary strengths and weaknesses of model-based and model-free reinforcement learning have led to a number of works that attempt to combine these approaches.
Common strategies include initializing a model-free algorithm with the solution found by a model-based planner \citep{levine2013gps,farshidian2014learning,nagabandi2018mbmf},
feeding model-generated data into an otherwise model-free optimizer \citep{sutton1990dyna,silver2008dyna2,lampe2014modelnfq,kalweit2017uncertainty,luo2018algorithmic}, using model predictions to improve the quality of target values for temporal difference learning \citep{buckman2018steve,feinberg2018mve},
leveraging model gradients for backpropagation \citep{nguyen1990neural,jordan1992forward,heess2015svg}, and incorporating model-based planning without explicitly predicting future observations \citep{tamar2016vin,silver2017predictron,oh2017vpn,kahn2018gcg,amos2018differentiable,schrittwieser2019muzero}.
In contrast to combining independent model-free and model-based components, we describe a framework for training a new class of predictive model with a generative, model-based reinterpretation of model-free tools.

Temporal difference models (TDMs) \cite{pong2018tdms} provide an alternative method of training models with what are normally considered to be model-free algorithms.
TDMs interpret models as a special case of goal-conditioned value functions \citep{kaelbling1993learning,foster2002structure,schaul2015uvfa,andrychowicz2017hindsight}, though the TDM is constrained to predict at a fixed horizon and is limited to tasks for which the reward depends only on the last state.
In contrast, the $\gamma$-model predicts over a discounted infinite-horizon future and accommodates arbitrary rewards.

The most closely related prior work to $\gamma$-models is the successor representation \citep{dayan1993successor}, a formulation of long-horizon prediction that has been influential in both cognitive science \citep{momennejad2017successor,gershman2018successor} and machine learning \citep{kulkarni2016deep,ma2018universal}.
In its original form, the successor representation is tractable only in tabular domains.
Prior continuous variants have focused on policy evaluation based on expected state featurizations \citep{barreto2017successor,barreto2018transfer,hansen2020fast}, forgoing an interpretation as a probabilistic model suitable for state prediction.
Converting the tabular successor representation into a continuous generative model is non-trivial because the successor representation implicitly assumes the ability to normalize over a finite state space for interpretation as a predictive model.

Because of the discounted state occupancy's central role in reinforcement learning, its approximation by Bellman equations has been the focus of multiple lines of work.
Generalizations include $\beta$-models \citep{sutton1995beta}, allowing for arbitrary mixture distributions over time, and option models \citep{sutton1999semimdps}, allowing for state-dependent termination conditions.
While our focus is on generative models featuring the state-independent geometric timestep weighting of the successor representation, we are hopeful that the tools developed in this paper could also be applicable in the design of continuous analogues of these generalizations.

\vspace{-.05cm}
\section{Preliminaries}
\vspace{-.05cm}
\label{sec:prelim}

We consider an infinite-horizon Markov decision process (MDP) defined by the tuple $(\mathcal{S}, \mathcal{A}, p, r, \gamma)$, with state space $\mathcal{S}$ and action space $\mathcal{A}$.
The transition distribution and reward function are given by $p: \mathcal{S} \times \mathcal{A} \times \mathcal{S} \to \mathbb{R}^{+}$ and $r : \mathcal{S} \to \mathbb{R}$, respectively.
The discount is denoted by $\gamma \in (0,1)$.
A policy $\pi : \mathcal{S} \times \mathcal{A} \to \mathbb{R}^{+}$ induces a conditional occupancy $\occupancy(\bs \mid \st, \at)$ over future states:
\newcommand{\delt}{\Delta t}
\begin{equation}
\label{eq:occupancy}
    \occupancy(\bs \mid \st, \at) = (1-\gamma) \sum_{\delt=1}^{\infty} \gammam^{\delt-1} p(\mathbf{s}_{t+\delt} = \bs \mid \st, \at, \pi).
\end{equation}

We denote parametric approximations of $p$ ($\occupancy$) as $p_\theta$ ($\model$), in which the subscripts denote model parameters.
Standard model-based reinforcement learning algorithms employ the single-step model $p_\theta$ for long-horizon decision-making by performing multi-step model-based rollouts.

\section{Generative Temporal Difference Learning}

Our goal is to make long-horizon predictions without the need to repeatedly apply a single-step model.
Instead of modeling states at a particular instant in time by approximating the environment transition distribution $\psingle(\stp \mid \st, \at)$, we aim to predict a weighted distribution over all possible future states according to $\occupancy(\bs \mid \st, \at)$.
In principle, this can be posed as a conventional maximum likelihood problem:
\begin{equation*}
\max_{\theta}~ \expect{
    \st,\at,\bs \sim \mu(\cdot \mid \st, \at)
}{
    \log \model(\bs \mid \st, \at)
}.
\end{equation*}
However, doing so would require collecting samples from the occupancy $\occupancy$ independently for each policy of interest.
Forgoing the ability to re-use data from multiple policies when training dynamics models would sacrifice the sample efficiency that often makes model usage compelling in the first place, so we instead aim to design an off-policy algorithm for training $\model$.
We accomplish this by reinterpreting temporal difference learning as a method for training generative models.

Instead of collecting only on-policy samples from $\occupancy(\bs \mid \st, \at)$, we observe that $\occupancy$ admits a convenient recursive form.
Consider a modified MDP in which there is a $1-\gamma$ probability of terminating at each timestep.
The distribution over the state at termination, denoted as the exit state $\se$, corresponds to first sampling from a termination timestep $\deltat \sim \text{Geom}(1-\gamma)$ and then sampling from the per-timestep distribution $p(\bs_{t+\deltat} \mid \st, \at, \pi)$.
The distribution over $\se$ corresponds exactly to that in the definition of the occupancy $\occupancy$ in Equation~\ref{eq:occupancy}, but also lends itself to an interpretation as a mixture over only two components: the distribution at the immediate next timestep, in the event of termination, and that over all subsequent timesteps, in the event of non-termination.
This mixture yields the following target distribution:
\begin{equation}
\label{eq:gamma_targp}
    \ptd{\se} = \underset{\text{single-step distribution}}{\underbrace{(1-\gamma) p(\se \mid \st, \at)}} + \underset{\text{model bootstrap}}{\underbrace{\gamma \expect{\stp \sim p(\cdot \mid \st, \at)}{\model(\se \mid \stp)}}}.
\end{equation}
We use the shorthand $\model(\se \mid \stp) = \expect{\atp \sim \pi(\cdot \mid \stp)}{\model(\se \mid \stp, \atp)}$.
The target distribution $p_\text{targ}$ is reminiscent of a temporal difference target value: the state-action conditioned occupancy ${\model(\se \mid \st, \at)}$ acts as a $Q$-function, the state-conditioned occupancy $\model(\se \mid \stp)$ acts as a value function, and the single-step distribution $\psingle(\stp \mid \st, \at)$ acts as a reward function.
However, instead of representing a scalar target value, $p_\text{targ}$ is a distribution from which we may sample future states $\se$.
We can use this target distribution in place of samples from the true discounted occupancy $\occupancy$:
\begin{equation*}
\max_{\theta}~ \expect{
    \st, \at, \se \sim (1-\gamma)p(\cdot \mid \st, \at) + \gamma \expect{}{\model(\cdot \mid \stp)}
}{
    \log \model(\se \mid \st, \at)
}.
\end{equation*}

This formulation differs from a standard maximum likelihood learning problem in that the target distribution depends on the current model.
By bootstrapping the target distribution in this manner, we are able to use only empirical $(\st, \at, \stp)$ transitions from one policy in order to train an infinite-horizon predictive model $\model$ for any other policy.
Because the horizon is governed by the discount $\gamma$, we refer to such a model as a $\gamma$-model.

This bootstrapped model training may be incorporated into a number of different generative modeling frameworks.
We discuss two cases here.
(1) When the model $\model$ permits only sampling, we may train $\model$ by minimizing an $f$-divergence from samples:
\begin{equation}
\label{eq:objective_fdiv}
\mathcal{L}_1(\st, \at, \stp) = \fdiv{\model(\cdot \mid \st, \at)}{(1-\gamma)p(\cdot \mid \st, \at) + \gamma \model(\cdot \mid \stp)}.
\end{equation}
This objective leads naturally to an adversarially-trained $\gamma$-model.
(2)~When the model $\model$ permits density evaluation, we may minimize an error defined on log-densities directly:
\begin{equation}
\label{eq:objective_logp}
    \mathcal{L}_2(\st, \at, \stp) = \mathbb{E}_{\se} \Big[
    \big\lVert \log \model(\se \mid \st, \at) - \log\big( 
    (1-\gamma)p(\se \mid \st, \at) + \gamma \model(\se \mid \stp)
    \big)
    \big\lVert_2^2 \Big].
\end{equation}
This objective is suitable for $\gamma$-models instantiated as normalizing flows.
Due to the approximation of a target log-density $\log\left((1-\gamma) p(\cdot \mid \st, \at) + \gamma \expect{\stp}{\model(\cdot \mid \stp)}\right)$ using a single next state $\stp$, $\mathcal{L}_2$ is unbiased for deterministic dynamics and a bound in the case of stochastic dynamics.
We provide complete algorithmic descriptions of both variants and highlight practical training considerations in Section~\ref{sec:practical}.

\section{Analysis and Applications of \texorpdfstring{$\boldsymbol{\gammam}$}{Gamma}-Models}

Using the $\gamma$-model for prediction and control requires us to generalize procedures common in model-based reinforcement learning. In this section, we derive the $\gamma$-model rollout and show how it can be incorporated into a reinforcement learning procedure that hybridizes model-based and model-free value estimation. First, however, we show that the $\gamma$-model is a continuous, generative counterpart to another type of long-horizon model: the successor representation.

\subsection{\texorpdfstring{$\boldsymbol{\gammam}$}{Gamma}-Models as a Continuous Successor Representation}

The successor representation $M$ is a prediction of expected visitation counts \citep{dayan1993successor}. It has a recurrence relation making it amenable to tabular temporal difference algorithms: 
\begin{equation}
\label{eq:sr}
    M(\se \mid \st, \at) = \expect{\stp \sim p(\cdot \mid \st, \at)}{\indicator{\se = \stp} + \gamma M(\se \mid \stp)}.
\end{equation}
Adapting the successor representation to continuous state spaces in a way that retains an interpretation as a probabilistic model has proven challenging.
However, variants that forego the ability to sample in favor of estimating expected state features have been developed \citep{barreto2017successor}.

The form of the successor recurrence relation bears a striking resemblance to that of the target distribution in Equation~\ref{eq:gamma_targp}, suggesting a connection between the generative, continuous $\gamma$-model and the discriminative, tabular successor representation.
We now make this connection precise.

\begin{prop}
\label{thm:equivalence}
The global minimum of both $\mathcal{L}_1$ and $\mathcal{L}_2$ is achieved if and only if the resulting $\gamma$-model produces samples according to the normalized successor representation:
\begin{equation*}
    \model(\se \mid \st, \at) = (1-\gamma) M(\se \mid \st, \at).
\end{equation*}
\end{prop}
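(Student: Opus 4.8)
The plan is to reduce the proposition to a Bellman-style fixed-point characterization, separating the ``what is the optimum'' question from the ``why is it unique'' question. First I would establish the clean identity $\occupancy(\se \mid \st, \at) = (1-\gamma) M(\se \mid \st, \at)$, so that the successor representation of the statement is, up to the $(1-\gamma)$ factor, exactly the true occupancy $\occupancy$. This follows by unrolling the recurrence in Equation~\ref{eq:sr}: repeatedly substituting $M(\se \mid \stp)$ into itself and using the Markov property of $\pi$ gives $M(\se \mid \st, \at) = \sum_{\delt=1}^{\infty} \gamma^{\delt-1} p(\mathbf{s}_{t+\delt} = \se \mid \st, \at, \pi)$, which is precisely $(1-\gamma)^{-1}\occupancy$ from Equation~\ref{eq:occupancy}. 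The target density in the proposition is therefore nothing but $\occupancy$ itself.

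Next I would introduce the bootstrap operator $\mathcal{T}$ that sends a conditional density $\nu$ to $(\mathcal{T}\nu)(\se \mid \st, \at) = (1-\gamma) p(\se \mid \st, \at) + \gamma \expect{\stp}{\nu(\se \mid \stp)}$, i.e. the target $\ptd{\se}$ with $\model$ replaced by $\nu$, and verify that $\occupancy$ is a fixed point of $\mathcal{T}$. This amounts to peeling the $\delt=1$ term off the series defining $\occupancy$, reindexing the remaining tail, and recognizing the marginalization over $\stp$; the result reassembles into $(1-\gamma) p(\se \mid \st, \at) + \gamma \expect{\stp}{\occupancy(\se \mid \stp)}$. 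This is the generative analogue of the statement that the discounted occupancy solves its own Bellman equation.

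I would then argue that each loss attains its global minimum exactly on the fixed points of $\mathcal{T}$. For $\mathcal{L}_1$, an $f$-divergence is non-negative and vanishes if and only if its two arguments coincide, so the expected loss is zero precisely when $\model(\cdot \mid \st, \at) = (\mathcal{T}\model)(\cdot \mid \st, \at)$ almost everywhere. For $\mathcal{L}_2$, the squared log-density error is likewise non-negative and vanishes exactly when $\log \model$ equals the target log-density pointwise, which in the deterministic case is again the fixed-point condition. Since $\occupancy$ is a fixed point and both objectives are bounded below by zero, this minimum is realized by $\model = \occupancy$.

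Finally I would close the equivalence with a uniqueness argument: $\mathcal{T}$ is affine, so the difference of two fixed points $\nu_1 - \nu_2$ satisfies $(\nu_1 - \nu_2)(\se \mid \st, \at) = \gamma \expect{\stp}{(\nu_1 - \nu_2)(\se \mid \stp)}$; taking a supremum over all arguments and using $\gamma < 1$ forces $\nu_1 = \nu_2$, so $\mathcal{T}$ is a $\gamma$-contraction whose unique fixed point is $\occupancy$. Combined with $\occupancy = (1-\gamma)M$, this yields both directions of the ``if and only if.'' The step I expect to be most delicate is the $\mathcal{L}_2$ analysis under stochastic dynamics, where the single-sample target only \emph{bounds} the true bootstrapped log-density by Jensen's inequality; I would handle this either by showing the bound is tight exactly at $\occupancy$ or by stating the exact equivalence for deterministic dynamics and invoking the bound otherwise, consistent with the remark preceding the statement.
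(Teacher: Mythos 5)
Your proposal takes essentially the same route as the paper: both arguments reduce each loss's global minimum to the fixed-point condition $\model(\se \mid \st, \at) = (1-\gamma)p(\se \mid \st, \at) + \gamma\,\mathbb{E}_{\stp}[\model(\se \mid \stp)]$ and then identify that condition with the successor-representation recurrence scaled by $(1-\gamma)$. The paper's proof is a two-line version of yours; your explicit verification that $\occupancy = (1-\gamma)M$, the contraction-based uniqueness of the fixed point, and the caveat about $\mathcal{L}_2$ under stochastic dynamics all make rigorous steps the paper leaves implicit, and are correct.
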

\vspace{-.4cm}
\begin{proof}
In the case of either objective, the global minimum is achieved only when
\begin{equation*}
    \model(\se \mid \st, \at) = (1-\gammam) p(\se \mid \st, \at) + \gammam \expect{\stp \sim p(\cdot \mid \st,\at)}{\model(\se \mid \stp)}
\end{equation*}
for all $\st, \at$.
We recognize this optimality condition exactly as the recurrence defining the successor representation $M$ (Equation~\ref{eq:sr}), scaled by $(1-\gamma)$ such that $\model$ integrates to $1$ over $\se$.
\end{proof}

\subsection{\texorpdfstring{$\boldsymbol{\gammam}$}{Gamma}-Model Rollouts}
\label{sec:gamma_rollouts}

Standard single-step models, which correspond to $\gamma$-models with $\gamma=0$, can predict multiple steps into the future by making iterated autoregressive predictions, conditioning each step on their own output from the previous step.
These sequential rollouts form the foundation of most model-based reinforcement learning algorithms.
We now generalize these rollouts to $\gamma$-models for $\gamma > 0$, allowing us to decouple the discount used during model training from the desired horizon in control.
When working with multiple discount factors, we explicitly condition an occupancy on its discount as $\occupancy(\se \mid \st; \gamma)$.
In the results below, we omit the model parameterization $\theta$ whenever a statement applies to both a discounted occupancy $\occupancy$ and a parametric $\gamma$-model $\model$.

\begin{thm}
\label{thm:weights}
Let $\occupancy_{n}(\se \mid \st; \gammam)$ denote the distribution over states at the $n^{\text{th}}$ sequential step of a $\gammam$-model rollout beginning from state $\st$. For any desired discount $\gammav \in [\gammam, 1)$, we may reweight the samples from these model rollouts according to the weights

\vspace{-.25cm}
\begin{equation*}
    \alpha_n = \frac{(1-\gammav)(\gammav - \gammam)^{n-1}}{(1-\gammam)^n}
\end{equation*}
\vspace{-.25cm}

to obtain the state distribution drawn from $\occupancy_{1}(\se \mid \st; \gammav) = \occupancy(\se \mid \st; \gammav)$.
That is, we may reweight the steps of a $\gammam$-model rollout so as to match the distribution of a $\gammav$-model with larger discount:

\vspace{-.3cm}
\begin{equation*}
    \occupancy(\se \mid \st; \gammav) = \sum_{n=1}^{\infty} \alpha_n \occupancy_{n}(\se \mid \st; \gamma). \\
\end{equation*}
\vspace{-.3cm}
\end{thm}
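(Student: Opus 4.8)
The plan is to reduce both sides to mixtures over the $k$-step transition kernel $p^{(k)}(\se \mid \st)$ obtained by following $\pi$ for $k$ environment steps, and then match the coefficient of each $p^{(k)}$. Unrolling the definition in Equation~\ref{eq:occupancy} gives the single-discount occupancy as a geometric mixture over timesteps,
\begin{equation*}
    \occupancy(\se \mid \st; \gammam) = (1-\gammam)\sum_{k=1}^{\infty} \gammam^{k-1}\, p^{(k)}(\se \mid \st),
\end{equation*}
and likewise $\occupancy(\se \mid \st; \gammav)$ with $\gammav$ in place of $\gammam$; so the coefficient I must reproduce on $p^{(k)}$ is $(1-\gammav)\gammav^{k-1}$.

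First I would pin down the rollout distribution $\occupancy_n$. One $\gammam$-model step advances the state by $\Delta t \sim \text{Geom}(1-\gammam)$ environment steps, and an $n$-step rollout composes this kernel $n$ times; by the Markov property the successive horizons are independent, so the total number of environment steps traversed is a sum of $n$ i.i.d.\ $\text{Geom}(1-\gammam)$ variables, which is negative binomial. Making this precise, I would show by induction on $n$ that
\begin{equation*}
    \occupancy_n(\se \mid \st; \gammam) = \sum_{k=n}^{\infty} \binom{k-1}{n-1}(1-\gammam)^n \gammam^{k-n}\, p^{(k)}(\se \mid \st),
\end{equation*}
with the inductive step coming from the composition $\occupancy_n(\se \mid \st) = \int \occupancy_{n-1}(\bs' \mid \st)\,\occupancy_1(\se \mid \bs')\,\dd{\bs'}$, which collapses $p^{(j)}$ against $p^{(k-j)}$ into $p^{(k)}$ and turns the coefficient recursion into the convolution of a negative binomial with a geometric. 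I expect this to be the main obstacle: everything downstream is mechanical, but the clean statement that composing occupancy kernels convolves the geometric horizon distributions needs the Markov property and the independence of the per-step geometric draws, and I would want to be careful that the interchange of the resulting double sum is justified (all terms are nonnegative and $\gammam,\gammav \in [0,1)$ ensures absolute convergence).

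With the negative-binomial form in hand, the claim follows by substituting $\alpha_n$ and summing over $n$ at fixed $k$. Setting $m=n-1$ and cancelling the factors of $(1-\gammam)$, the coefficient of $p^{(k)}$ in $\sum_{n=1}^{\infty}\alpha_n \occupancy_n$ is
\begin{equation*}
\begin{aligned}
    \sum_{n=1}^{k}\frac{(1-\gammav)(\gammav-\gammam)^{n-1}}{(1-\gammam)^n}\binom{k-1}{n-1}(1-\gammam)^n\gammam^{k-n}
    &= (1-\gammav)\sum_{m=0}^{k-1}\binom{k-1}{m}(\gammav-\gammam)^m \gammam^{(k-1)-m} \\
    &= (1-\gammav)\gammav^{k-1},
\end{aligned}
\end{equation*}
where the last equality is the binomial theorem applied to $\big((\gammav-\gammam)+\gammam\big)^{k-1}$. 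This matches the target coefficient for every $k\ge 1$, proving the identity. As sanity checks I would note that $\gammav\ge\gammam$ makes every $\alpha_n$ nonnegative, that $\gammav<1$ makes $\sum_n\alpha_n$ a convergent geometric series summing to $1$ (so the reweighting is a genuine probability mixture), and that $n=1$ recovers $\occupancy_1(\cdot;\gammam)=\occupancy(\cdot;\gammam)$ as asserted.
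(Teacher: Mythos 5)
Your proof is correct and rests on the same key fact as the paper's: the timestep reached after $n$ $\gammam$-model steps is a sum of $n$ i.i.d.\ $\text{Geom}(1-\gammam)$ variables and hence negative binomial, so that $\occupancy_n(\cdot \mid \st; \gammam)$ is the corresponding mixture of the $k$-step kernels $p^{(k)}$. Where you diverge is in how the weights are handled. The paper treats the $\alpha_n$ as unknowns and solves the resulting system by induction on $n$, exploiting its triangular structure (each $p_n$ has support only for $t \ge n$) to extract $\alpha_n$ from the equation at $t=n$; you instead take the stated $\alpha_n$ as given and verify them by matching the coefficient of each $p^{(k)}$, which collapses in one line via the binomial theorem applied to $\big((\gammav-\gammam)+\gammam\big)^{k-1}$. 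The same binomial identity sits inside the paper's induction step, so the two arguments are algebraically equivalent; your direct verification is shorter, while the paper's derivation additionally shows how the weights would be found in the first place and that they are the unique mixture weights compatible with the triangular support structure. Your explicit justification of the negative-binomial form via the Markov property and Chapman--Kolmogorov composition of the $p^{(k)}$, and your sanity checks that the $\alpha_n$ are nonnegative for $\gammav \ge \gammam$ and sum to $1$ (so the reweighting is a genuine probability mixture), are details the paper leaves implicit and are worth keeping.
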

\vspace{-.5cm}
\begin{proof}
See Appendix~\ref{app:weights}.
\end{proof}
\vspace{-.25cm}

This reweighting scheme has two special cases of interest.
A standard single-step model, with $\gammam=0$, yields $\alpha_n = (1-\gammav) \gammav^{n-1}$. These weights are familiar from the definition of the discounted state occupancy in terms of a per-timestep mixture (Equation~\ref{eq:occupancy}).
Setting $\gammam=\gammav$ yields $\alpha_n = 0^{n-1}$, or a weight of 1 on the first step and $0$ on all subsequent steps.\footnote{We define $0^0$ as $\lim_{x \to 0} x^x = 1$.} This result is also expected: when the model discount matches the target discount, only a single forward pass of the model is required.

\begin{figure}
\centering
\includegraphics[width=0.495\linewidth]{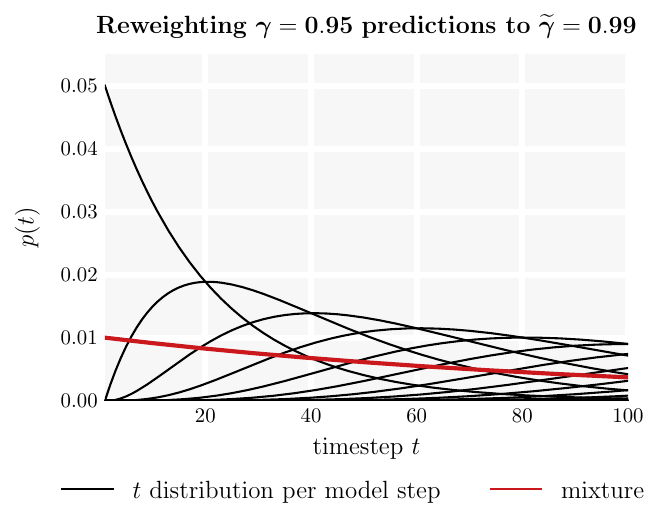}
\includegraphics[width=0.495\linewidth]{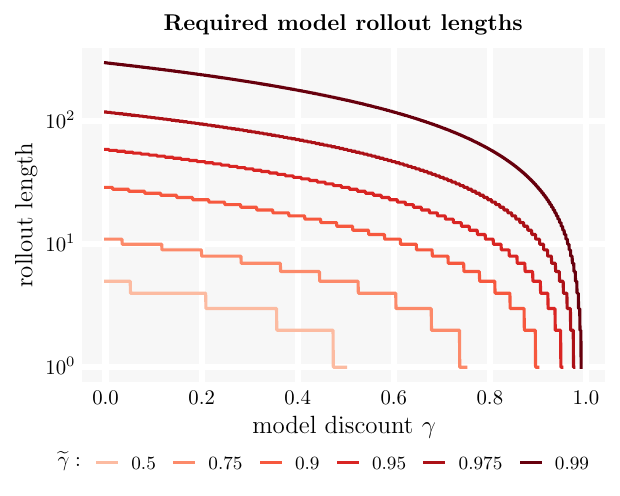} \\
\vspace{-5.475cm}
\small
\textbf{(a)} \hspace{6.5cm} \textbf{(b)}
\hspace{6.5cm}~ \\
\normalsize
\vspace{4.9cm}
\caption{
\textbf{(a)} The first step from a $\gamma$-model samples states at timesteps distributed according to a geometric distribution with parameter $1-\gamma$; all subsequent steps have a negative binomial timestep distribution stemming from the sum of independent geometric random variables.
When these steps are reweighted according to Theorem~\ref{thm:weights}, the resulting distribution follows a geometric distribution with smaller parameter (corresponding to a larger discount value $\gammav$).
\textbf{(b)} The number of steps needed to recover $95\%$ of the probability mass from distributions induced by various target discounts $\gammav$ for all valid model discounts $\gammam$.
When using a standard single-step model, corresponding to the case of $\gammam=0$, a $299$-step model rollout is required to reweight to a discount of $\gammav=0.99$. 
\vspace{-.4cm}
}
\label{fig:sr_composition}
\end{figure}

Figure~\ref{fig:sr_composition} visually depicts the reweighting scheme and the number of steps required for truncated model rollouts to approximate the distribution induced by a larger discount.
There is a natural tradeoff with $\gamma$-models: the higher $\gammam$ is, the fewer model steps are needed to make long-horizon predictions, reducing model-based compounding prediction errors \citep{asadi2019combating,janner2019mbpo}.
However, increasing $\gammam$ transforms what would normally be a standard maximum likelihood problem (in the case of single-step models) into one resembling approximate dynamic programming (with a model bootstrap), leading to model-free bootstrap error accumulation \citep{kumar2019stabilizing}.
The primary distinction is whether this accumulation occurs during training, when the work of sampling from the occupancy $\occupancy$
is being amortized, or during ``testing'', when the model is being used for rollouts.
While this horizon-based error compounding cannot be eliminated entirely, $\gamma$-models allow for a continuous interpolation between the two extremes.

\subsection{\texorpdfstring{$\boldsymbol{\gammam}$}{Gamma}-Model-Based Value Expansion}
\vspace{-.125cm}
\label{sec:gamma_mve}

We now turn our attention from prediction with $\gamma$-models to value estimation for control.
In tabular domains, the state-action value function can be decomposed as the inner product between the successor representation $M$
and the vector of per-state rewards \citep{gershman2018successor}.
Taking care to account for the normalization from the equivalence in Proposition~\ref{thm:equivalence}, we can similarly estimate the $Q$ function as the expectation of reward under states sampled from the $\gamma$-model:






\vspace{-.65cm}
\begin{align}
    Q(\st, \at; \gamma) &= \sum_{\deltat = 1}^{\infty} \gamma^{\deltat - 1} \int_{\mathcal{S}} r(\se) p(\bs_{t+\deltat} = \se \mid \st, \at, \pi) \dd{\se} \nonumber \\
    &= \int_\mathcal{S} r(\se) \sum_{\deltat = 1}^{\infty} \gamma^{\deltat - 1} p(\bs_{t+\deltat} = \se \mid \st, \at, \pi) \dd{\se} \nonumber \\
    &= \frac{1}{1-\gamma} \expect{\se \sim \occupancy(\cdot \mid \st, \at; \gamma)}{r(\se)}
\label{eq:q_estimation}
\end{align}
 \vspace{-.45cm}
 
This relation suggests a model-based reinforcement learning algorithm in which $Q$-values are estimated by a $\gamma$-model without the need for sequential model-based rollouts.
However, in some cases it may be practically difficult to train a generative $\gamma$-model with discount as large as that of a discriminative $Q$-function.
While one option is to chain together $\gamma$-model steps as in Section~\ref{sec:gamma_rollouts}, an alternative solution often effective with single-step models is to combine short-term value estimates from a truncated model rollout with a terminal model-free value prediction:

\vspace{-.35cm}
\begin{equation*}
\label{eq:mve}
    V_\text{MVE}(\st; \gammav) = \sum_{n=1}^{H} \gammav^{n - 1} r(\bs_{t+n}) + \gammav^{H} V(\bs_{t+H}; \gammav).
\end{equation*}
\vspace{-.35cm}

This hybrid estimator is referred to as a model-based value expansion (MVE; \citealt{feinberg2018mve}).
There is a hard transition between the model-based and model-free value estimation in MVE, occuring at the model horizon $H$.
We may replace the single-step model with a $\gamma$-model for a similar estimator in which there is a probabilistic prediction horizon, and as a result a gradual transition:


\vspace{-.55cm}
\begin{equation*}
    V_{\gamma\text{-MVE}}(\st; \gammav) = \frac{1}{1-\gammav} \sum_{n=1}^{H} \alpha_n \expect{\se \sim \occupancy_{n}(\cdot \mid \st; \gamma)}{r(\se)}
    + \left( \frac{\gammav-\gammam}{1-\gammam} \right)^{\!H} \expect{\se \sim \occupancy_{H}(\cdot \mid \st; \gamma)}{V(\se; \gammav)}.    
\end{equation*}
\vspace{-.45cm}





The $\gamma$-MVE estimator allows us to perform $\gamma$-model-based rollouts with horizon $H$, reweight the samples from this rollout by solving for weights $\alpha_n$ given a desired discount $\gammav > \gammam$, and correct for the truncation error stemming from the finite rollout length using a terminal value function with discount $\gammav$.
As expected, MVE is a special case of $\gamma$-MVE, as can be verified by considering the weights corresponding to $\gamma=0$ described in Section~\ref{sec:gamma_rollouts}.
This estimator, along with the simpler value estimation in Equation~\ref{eq:q_estimation}, highlights the fact that it is not necessary to have timesteps associated with states in order to use predictions for decision-making.
We provide a more thorough treatment of $\gamma$-MVE, complete with pseudocode for a corresponding actor-critic algorithm, in Appendix~\ref{app:gamma_mve}.

{


\begin{figure}[t]
\begin{minipage}{\linewidth}
\begin{algorithm}[H]
\caption{~~$\gammam$-model training {\color{gamma_red}without density evaluation}}
\label{alg:practical_samples}
\begin{algorithmic}[1]
\STATE \textbf{Input} $\mathcal{D}:$ dataset of transitions, $\pi:$ policy, $\lambda:$ step size, $\tau:$ delay parameter
\STATE Initialize parameter vectors $\theta, \widebar{\theta}, {\color{gamma_red}\phi}$
\WHILE{not converged}
    \STATE Sample transitions $(\st, \at, \stp)$ from $\mathcal{D}$ and actions $\atp \sim \pi(\cdot \mid \stp)$
    {\color{gamma_red}
    \STATE Sample from bootstrapped target~$\se^{+} \sim (1-\gamma) \delta_{\stp} + \gamma \targmodel(\cdot \mid \stp, \atp)$
    \STATE Sample from current model~$\se^{-} \sim \model(\cdot \mid \st, \at)$
    \STATE Evaluate objective~$\mathcal{L} = \log \disc(\se^{+} \mid \st, \at) + \log(1 - \disc(\se^{-} \mid \st, \at))$
    }
    \STATE Update model parameters~$\theta \gets \theta - \lambda \nabla_{\theta} \mathcal{L}$;~
    {\color{gamma_red}$\phi \gets \phi + \lambda \nabla_{\phi} \mathcal{L}$}
    \STATE Update target parameters $\widebar{\theta} \gets \tau \theta + (1-\tau) \widebar{\theta}$
\ENDWHILE
\end{algorithmic}
\end{algorithm}
\end{minipage}


\begin{minipage}{\linewidth}
\begin{algorithm}[H]
\caption{~~$\gammam$-model training {\color{gamma_blue}with density evaluation}}
\label{alg:practical_logp}
\begin{algorithmic}[1]
\STATE \textbf{Input} $\mathcal{D}:$ dataset of transitions, $\pi:$ policy, $\lambda:$ step size, $\tau:$ delay parameter, {\color{gamma_blue}$\sigma^2:$ variance}
\STATE Initialize parameter vectors $\theta, \widebar{\theta}$; {\color{gamma_blue}let $f$ denote the Gaussian pdf}
\WHILE{not converged}
    \STATE Sample transitions $(\st, \at, \stp)$ from $\mathcal{D}$ and actions $\atp \sim \pi(\cdot \mid \stp)$
    {\color{gamma_blue}
    \STATE Sample from bootstrapped target~$\se \sim (1-\gamma) \mathcal{N}(\stp, \sigma^2) + \gamma \targmodel(\cdot \mid \stp, \atp)$
    \STATE Construct target values $T = \log\big( 
    (1-\gamma)f(\se \mid \stp, \sigma^2) + \gamma \targmodel(\se \mid \stp, \atp)
    \big)$
    \STATE Evaluate objective~$\mathcal{L} =
    \lVert \log \model(\se \mid \st, \at) - T\lVert_2^2$
    }
    \STATE Update model parameters $\theta \gets \theta - \lambda \nabla_{\theta} \mathcal{L}$
    \STATE Update target parameters $\widebar{\theta} \gets \tau \theta + (1-\tau) \widebar{\theta}$
\ENDWHILE
\end{algorithmic}
\end{algorithm}
\end{minipage}
\end{figure}

}

\section{Practical Training of \texorpdfstring{$\boldsymbol{\gammam}$}{Gamma}-Models}
\vspace{-.15cm}
\label{sec:practical}

Because $\gamma$-model training differs from standard dynamics modeling primarily in the bootstrapped target distribution and not in the model parameterization, $\gamma$-models are in principle compatible with any generative modeling framework.
We focus on two representative scenarios, differing in whether the generative model class used to instantiate the $\gamma$-model allows for tractable density evaluation.

\vspace{-.25cm}
\paragraph{{\color{gamma_red}Training without density evaluation}}
When the $\gamma$-model parameterization does not allow for tractable density evaluation,
we minimize a bootstrapped $f$-divergence according to $\mathcal{L}_1$ (Equation~\ref{eq:objective_fdiv}) using only samples from the model.
The generative adversarial framework provides a convenient way to train a parametric generator by minimizing an $f$-divergence of choice given only samples from a target distribution $p_\text{targ}$ and the ability to sample from the generator \citep{goodfellow2014gan,nowozin2016fgan}.
In the case of bootstrapped maximum likelihood problems, our target distribution is induced by the model itself (alongside a single-step transition distribution), meaning that we only need sample access to our $\gamma$-model in order to train $\model$ as a generative adversarial network (GAN).

Introducing an auxiliary discriminator $\disc$ and selecting the Jensen-Shannon divergence as our $f$-divergence, we can reformulate minimization of the original objective $\mathcal{L}_1$ as a saddle-point \nobreak{optimization} over the following objective:
\begin{equation*}
    \hat{\mathcal{L}}_1(\st,\at)
    = \expect{\se^{+} \sim p_\text{targ}(\cdot \mid \st, \at)}{\log \disc(\se^{+} \mid \st, \at)}
    + \expect{\se^{-} \sim \model(\cdot \mid \st, \at)}{\log(1-\disc(\se^{-} \mid \st, \at))},
\end{equation*}
which is minimized over $\model$ and maximized over $\disc$.
As in $\mathcal{L}_1$, $p_\text{targ}$ refers to the bootstrapped target distribution in Equation~\ref{eq:gamma_targp}.
In this formulation, $\model$ produces samples by virtue of a deterministic mapping of a random input vector $\mathbf{z} \sim \mathcal{N}(0,I)$ and conditioning information $(\st, \at)$.
Other choices of $f$-divergence may be instantiated by different choices of activation function \citep{nowozin2016fgan}.

\paragraph{{\color{gamma_blue}Training with density evaluation}}
When the $\gamma$-model permits density evaluation, we may bypass saddle point approximations to an $f$-divergence and directly regress to target density values, as in objective $\mathcal{L}_2$ (Equation~\ref{eq:objective_logp}).
This is a natural choice when the $\gamma$-model is instantiated as a conditional normalizing flow \citep{rezende15variational}.
Evaluating target values of the form
\begin{equation*}
    T(\st, \at, \stp, \se) = \log\big( 
    (1-\gamma)p(\se \mid \st, \at) + \gamma \model(\se \mid \stp)
    \big)
\end{equation*}
requires density evaluation of not only our $\gamma$-model, but also the single-step transition distribution.
There are two options for estimating the single-step densities: (1) a single-step model $p_\theta$ may be trained alongside the $\gamma$-model $\model$ for the purposes of constructing targets $T(\st, \at, \stp, \se)$, or (2) a simple approximate model may be constructed on the fly from $(\st, \at, \stp)$ transitions.
We found $p_\theta = \mathcal{N}(\stp, \sigma^2)$, with $\sigma^2$ a constant hyperparameter, to be sufficient.

\paragraph{Stability considerations}
To alleviate the instability caused by bootstrapping, we appeal to the standard solution employed in model-free reinforcement learning: decoupling the regression targets from the current model by way of a ``delayed" target network \citep{mnih2015humanlevel}.
In particular, we use a delayed $\gamma$-model $\targmodel$ in the bootstrapped target distribution $p_\text{targ}$,
with the parameters $\widebar{\theta}$ given by an exponentially-moving average of previous parameters $\theta$.

We summarize the above scenarios in
Algorithms~\ref{alg:practical_samples}~and~\ref{alg:practical_logp}.
We isolate model training from data collection and focus on a setting in which a static dataset is provided, but this algorithm may also be used in a data-collection loop for policy improvement.
Further implementation details, including all hyperparameter settings and network architectures, are included in Appendix~\ref{app:implementation}.

\begin{figure}
    \centering
    \includegraphics[width=1.0\linewidth]{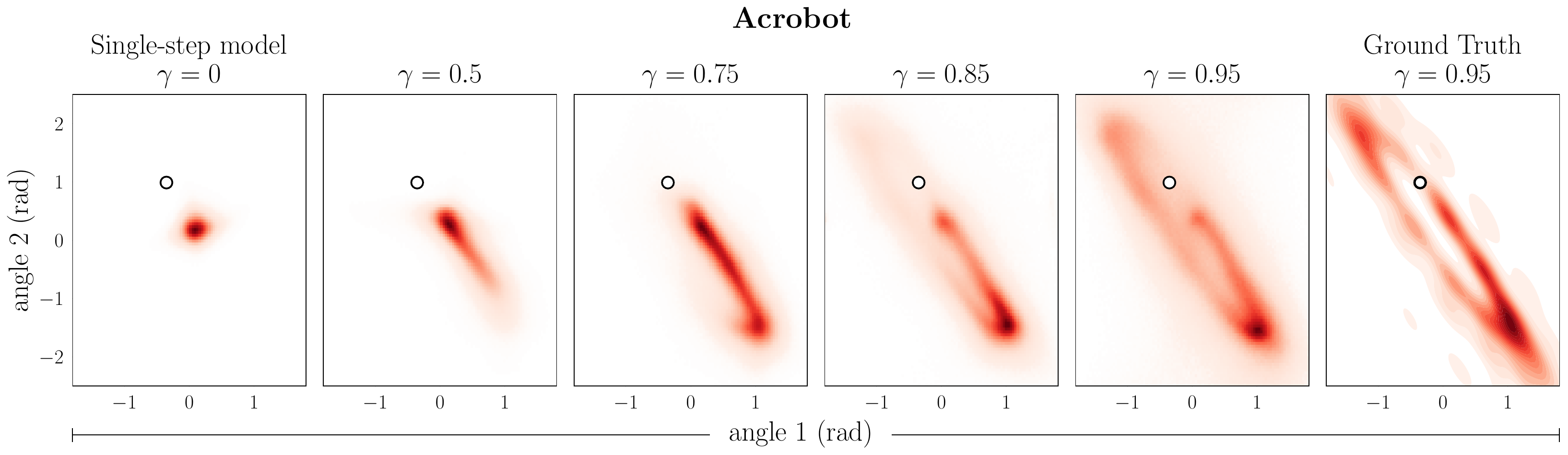} \\
    \vspace{0.2cm}
    \includegraphics[width=1.0\linewidth]{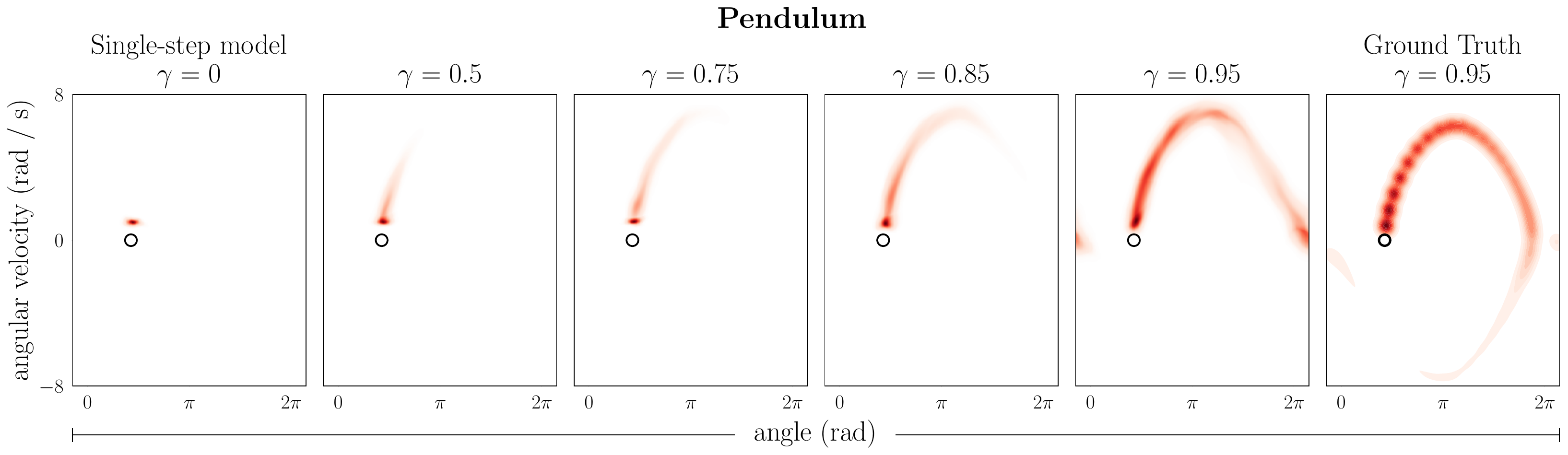}
    \vspace{-.4cm}
    \caption{
    Visualization of the predicted distribution from a \textbf{single} feedforward pass of normalizing flow $\gamma$-models trained with varying discounts $\gamma$.
    The conditioning state $\st$ is denoted by
    \protect\includegraphics[height=1.5ex]{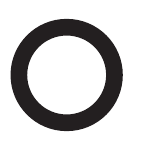}.
    The leftmost plots, with $\gamma=0$, correspond to a single-step model.
    For comparison, the rightmost plots show a Monte Carlo estimation of the discounted occupancy from $100$ environment trajectories.
    \vspace{-.35cm}
    }
    \label{fig:density}
\end{figure}

\section{Experiments}
\label{sec:experiments}

Our experimental evaluation is designed to study the viability of $\gamma$-models as a replacement of conventional single-step models for long-horizon state prediction and model-based control. 

\subsection{Prediction}

We investigate $\gamma$-model predictions as a function of discount in continuous-action versions of two benchmark environments suitable for visualization: acrobot \citep{sutton1996generalization} and pendulum.
The training data come from a mixture distribution over all intermediate policies of 200 epochs of optimization with soft-actor critic (SAC; \citealt{haarnoja18sac}).
The final converged policy is used for $\gamma$-model training.
We refer to Appendix~\ref{app:implementation} for implementation and experiment details.

Figure~\ref{fig:density} shows the predictions of a $\gamma$-model trained as a normalizing flow according to Algorithm~\ref{alg:practical_logp} for five different discounts, ranging from $\gamma=0$ (a single-step model) to $\gamma=0.95$.
The rightmost column shows the ground truth discounted occupancy corresponding to $\gamma=0.95$, estimated with Monte Carlo rollouts of the policy.
Increasing the discount $\gamma$ during training has the expected effect of qualitatively increasing the predictive lookahead of a single feedforward pass of the $\gamma$-model.
We found flow-based $\gamma$-models to be more reliable than GAN parameterizations, especially at higher discounts.
Corresponding GAN $\gamma$-model visualizations can be found in Appendix~\ref{app:gan_predictions} for comparison.

Equation~\ref{eq:q_estimation} expresses values as an expectation over a single feedforward pass of a $\gamma$-model.
We visualize this relation in Figure~\ref{fig:value_estimation}, which depicts $\gamma$-model predictions on the pendulum environment for a discount of $\gamma=0.99$ and the resulting value map estimated by taking expectations over these predicted state distributions.
In comparison, value estimation for the same discount using a single-step model would require 299-step rollouts in order to recover $95\%$ of the probability mass (see Figure~\ref{fig:sr_composition}).

\subsection{Control}

To study the utility of the $\gamma$-model for model-based reinforcement learning, we use the $\gamma$-MVE estimator from Section~\ref{sec:gamma_mve} as a drop-in replacement for value estimation in SAC.
We compare this approach to the state-of-the-art in model-based and model-free methods, with representative algorithms consisting of SAC, PPO \citep{schulman2017ppo}, MBPO \citep{janner2019mbpo}, and MVE~\citep{feinberg2018mve}.
In $\gamma$-MVE, we use a model discount of $\gamma=0.8$, a value discount of $\gammav=0.99$ and a single model step ($n=1$).
We use a model rollout length of $5$ in MVE such that it has an effective horizon identical to that of $\gamma$-MVE.
Other hyperparameter settings can once again be found in Appendix~\ref{app:implementation}; details regarding the evaluation environments can be found in Appendix~\ref{app:environments}.
Figure~\ref{fig:gamma_mve} shows learning curves for all methods.
We find that $\gamma$-MVE converges faster than prior algorithms, twice as quickly as SAC, while retaining their asymptotic performance.

\begin{figure}
    $$\hspace{1.1cm}
    \textbf{Rewards}
        \hspace{1.4cm}
    \boldsymbol{\gamma}\textbf{-model predictions}
        \hspace{1.2cm}
    \textbf{Value estimates}
        \hspace{0.9cm}
    \textbf{Ground truth}$$
    \vspace{-0.6cm}
    \centering
    \includegraphics[width=1.0\linewidth]{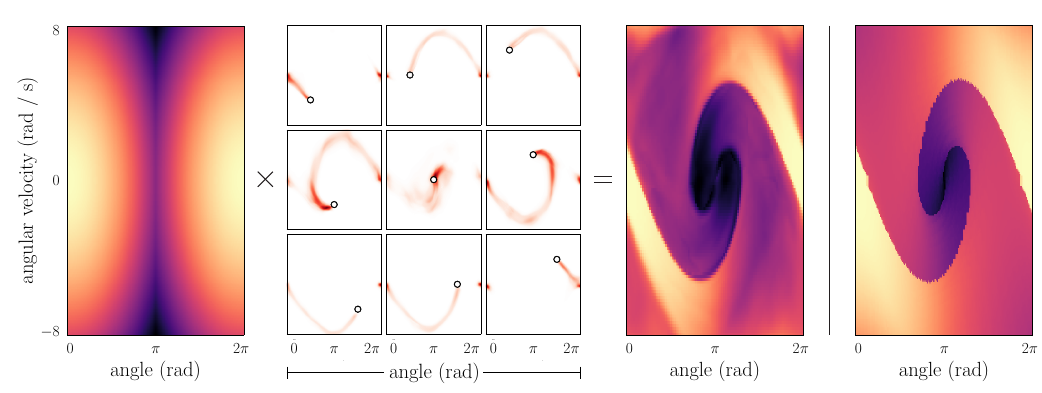} \\
    \caption{
    Values are expectations of reward over a single feedforward pass of a $\gamma$-model (Equation~\ref{eq:q_estimation}).
    We visualize $\gamma$-model predictions ($\gamma=0.99$) from nine starting states, denoted by
    \protect\includegraphics[height=1.5ex]{images/circle.pdf},
    in the pendulum benchmark environment.
    Taking the expectation of reward over each of these predicted distributions yields a value estimate for the corresponding conditioning state.
    The rightmost plot depicts the value map produced by value iteration on a discretization of the same environment for reference.
    }
    \label{fig:value_estimation}
\end{figure}


\begin{figure}[t!]
    \centering
    \includegraphics[width=1.0\linewidth]{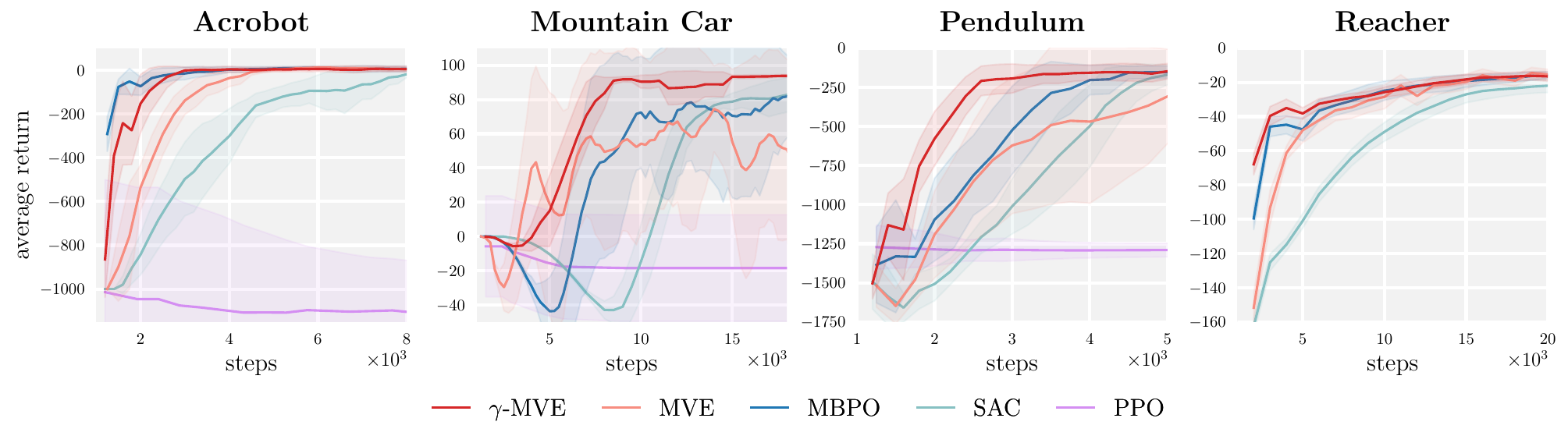}
    \vspace{-.5cm}
    \caption{
    Comparative performance of $\gamma$-MVE and four prior reinforcement learning algorithms on continuous control benchmark tasks. $\gamma$-MVE retains the asymptotic performance of SAC with sample-efficiency matching that of MBPO.
    Shaded regions depict standard deviation among $5$ seeds.
    }
    \vspace{-.25cm}
    \label{fig:gamma_mve}
\end{figure}
\section{Discussion, Limitations, and Future Work}
\label{sec:discussion}

We have introduced a new class of predictive model, a $\gamma$-model, that is a hybrid between standard model-free and model-based mechanisms. It is policy-conditioned and infinite-horizon, like a value function, but independent of reward, like a standard single-step model.
This new formulation of infinite-horizon prediction allows us to generalize the procedures integral to model-based control, yielding new variants of model rollouts and model-based value estimation.

Our experimental evaluation shows that, on tasks with low to moderate dimensionality, our method learns accurate long-horizon predictive distributions without sequential rollouts and can be incorporated into standard model-based reinforcement learning methods to produce results that are competitive with state-of-the-art algorithms.
Scaling up our framework to more complex tasks, including high-dimensional continuous control problems and tasks with image observations, presents a number of additional challenges.
We are optimistic that continued improvements in training techniques for generative models and increased stability in off-policy reinforcement learning will also carry benefits for $\gamma$-model training.

\newpage
\subsection*{Acknowledgements}
We thank Karthik Narasimhan, Pulkit Agrawal, Anirudh Goyal, Pedro Tsividis, Anusha Nagabandi, Aviral Kumar, and Michael Chang for formative discussions about model-based reinforcement learning and generative modeling.
This work was partially supported by
computational resource donations from Amazon.
M.J. is supported by fellowships from the National Science Foundation and the Open Philanthropy Project.

\bibliography{ms}
\bibliographystyle{setup/icml_2019}

\newpage
\begin{appendices}

\section{Derivation of \texorpdfstring{$\boldsymbol{\gammam}$}{Gamma}-Model-Based Rollout Weights}
\label{app:weights}

\setcounter{thm}{0}

\begin{proof}
Each step of the $\gamma$-model samples a time according to $\Delta t \sim \text{Geom}(1-\gamma)$, so the time after $n$ $\gamma$-model steps is distributed according to the sum of $n$ independent geometric random variables with identical parameters.
This sum corresponds to a negative binomial random variable, $\text{NB}(n, 1-\gamma)$, with the following pmf:
\begin{equation}
\label{eq:nb_pmf}
p_n(t) = {t-1 \choose t-n} \gammam^{(t-n)} (1-\gammam)^n
\end{equation}
Equation~\ref{eq:nb_pmf} is mildly different from the textbook pmf because we want a distribution over the total number of trials (in our case, cumulative timesteps $t$) instead of the number of successes before the $n^{\text{th}}$ failure.
The latter is more commonly used because it gives the random variable the same support, $t \ge 0$, for all $n$. The form in Equation~\ref{eq:nb_pmf} only has support for $t \ge n$, which substantially simplifies the following analysis.

The distributions $q(t)$ expressible as a mixture over the per-timestep negative binomial distributions $p_n$ are given by:
\begin{equation*}
    q(t) = \sum_{n=1}^{t} \alpha_n p_n(t),
\end{equation*}
in which $\alpha_n$ are the mixture weights. Because $p_n$ only has support for $t \ge n$, it suffices to only consider the first $t$ $\gamma$-model steps when solving for $q(t)$.

We are interested in the scenario in which $q(t)$ is also a geometric random variable with smaller parameter, corresponding to a larger discount $\gammav$.
We proceed by setting $q(t)=\text{Geom}(1-\gammav)$ and solving for the mixture weights $\alpha_n$ by induction.

\paragraph{Base case.}
Let $n=1$. Because $p_1$ is the only mixture component with support at $t=1$, $\alpha_1$ is determined by $q(1)$:
\begin{align*}
    1-\gammav &= \alpha_1 {t-1 \choose t-1} \gammam^{t-1} (1-\gammam)^t \\
    &= \alpha_1 (1-\gammam).
\end{align*}
Solving for $\alpha_1$ gives:
\begin{align*}
    \alpha_1 = \frac{1-\gammav}{1-\gammam}.
\end{align*}

\paragraph{Induction step.}
We now assume the form of $\alpha_k$ for $k = 1, \ldots, n-1$ and solve for $\alpha_n$ using $q(n)$.
\begin{align*}
    (1-\gammav) \gammav^{n-1} &= \sum_{k=1}^{n} \alpha_k {n-1 \choose n-k} \gammam^{n-k} (1-\gammam)^{k} \\
    &= \left\{ \sum_{k=1}^{n-1} \frac{(1-\gammav)(\gammav-\gammam)^{k-1}}{(1-\gammam)^k} {n-1 \choose n-k} \gammam^{n-k} (1-\gamma)^k \right\}
    + \alpha_n (1-\gammam)^n \\
    &= (1-\gammav) \left\{\sum_{k=1}^{n-1}  {n-1 \choose n-k} (\gammav-\gammam)^{k-1} \gammam^{n-k} \right\}
    + \alpha_n (1-\gammam)^n \\
    &= (1-\gammav) \left\{\sum_{k=1}^{n}  {n-1 \choose n-k} (\gammav-\gammam)^{k-1} \gammam^{n-k} \right\} - (1-\gammav) (\gammav - \gammam)^{n-1}
    + \alpha_n (1-\gammam)^n \\
    &= (1-\gammav) \gammav^{n-1} - (1-\gammav) (\gammav - \gammam)^{n-1}
    + \alpha_n (1-\gammam)^n \\
\end{align*}
Solving for $\alpha_n$ gives
\begin{align*}
    \alpha_n &= \frac{(1-\gammav)(\gammav-\gammam)^{n-1}}{(1-\gammam)^n}
\end{align*}
as desired.
\end{proof}

\section{Derivation of \texorpdfstring{$\boldsymbol{\gammam}$}{Gamma}-Model-Based Value Expansion}
\label{app:gamma_mve}
In this section, we derive the $\gamma$-MVE estimator and provide pseudo-code showing how it may be used as a drop-in replacement for value estimation in an actor-critic algorithm.
Before we begin, we prove a lemma which will become useful in interpreting value functions as weighted averages.

\vspace{.2cm}
\begin{lem}
\label{lem:gamma_mve_weights}
\begin{equation*}
1 - \sum_{n=1}^{H} \alpha_n = \left( \frac{\gammav-\gammam}{1-\gammam} \right)^{\!H}
\end{equation*}
\end{lem}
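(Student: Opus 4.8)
The plan is to recognize $\sum_{n=1}^{H}\alpha_n$ as a finite geometric series and evaluate it in closed form. Recalling from Theorem~\ref{thm:weights} that $\alpha_n = (1-\gammav)(\gammav-\gammam)^{n-1}/(1-\gammam)^{n}$, I would first pull out a common factor to expose the geometric structure. Writing $r = (\gammav-\gammam)/(1-\gammam)$, each weight factors as $\alpha_n = \tfrac{1-\gammav}{1-\gammam}\, r^{\,n-1}$, so that $\sum_{n=1}^{H}\alpha_n = \tfrac{1-\gammav}{1-\gammam}\sum_{n=1}^{H} r^{\,n-1} = \tfrac{1-\gammav}{1-\gammam}\cdot\tfrac{1-r^{H}}{1-r}$, where the last equality is the standard finite geometric sum (valid since $\gammav \in [\gammam,1)$ forces $r \in [0,1)$, so $r \neq 1$).

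The one step requiring a small observation is simplifying the prefactor. I would compute $1 - r = 1 - \tfrac{\gammav-\gammam}{1-\gammam} = \tfrac{(1-\gammam)-(\gammav-\gammam)}{1-\gammam} = \tfrac{1-\gammav}{1-\gammam}$, which is exactly the reciprocal of the leading constant $\tfrac{1-\gammav}{1-\gammam}$. Consequently the two factors cancel and $\sum_{n=1}^{H}\alpha_n = 1 - r^{H}$. Rearranging gives $1 - \sum_{n=1}^{H}\alpha_n = r^{H} = \bigl((\gammav-\gammam)/(1-\gammam)\bigr)^{H}$, which is the claim.

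There is no genuine obstacle here beyond careful bookkeeping; the only subtlety is noticing the cancellation $\tfrac{1-\gammav}{1-\gammam}\cdot\tfrac{1}{1-r} = 1$, which is what makes the prefactor disappear and the answer come out clean. As a consistency check, the edge case $\gammam = \gammav$ (hence $r = 0$) agrees with the convention $0^{0}=1$ adopted after Theorem~\ref{thm:weights}, giving $1 - \sum_{n=1}^{H}\alpha_n = 0$ for $H \geq 1$, matching the single-forward-pass special case where all mass lies on the first step.
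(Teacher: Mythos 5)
Your proposal is correct and follows essentially the same route as the paper: both recognize $\sum_{n=1}^{H}\alpha_n$ as a finite geometric series with ratio $r=(\gammav-\gammam)/(1-\gammam)$, apply the closed-form sum, and use the identity $1-r=(1-\gammav)/(1-\gammam)$ to cancel the prefactor. The only difference is cosmetic bookkeeping (you factor out $\tfrac{1-\gammav}{1-\gammam}$ and index by $r^{n-1}$, the paper factors out $\tfrac{1-\gammav}{\gammav-\gammam}$ and indexes by $r^{n}$), and your parameterization has the minor advantage of not dividing by $\gammav-\gammam$, so the $\gammav=\gammam$ edge case you check is handled without a separate convention.
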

\vspace{-.3cm}
\begin{proof}
\begin{align*}
1 - \sum_{n=1}^{H} \alpha_n
&= 1 - \left( \frac{1-\gammav}{\gammav-\gammam} \right) \sum_{n=1}^{H}  \left( \frac{\gammav - \gammam}{1-\gammam} \right)^{\!n} \\
&= 1 -  \left( \frac{1-\gammav}{\gammav-\gammam} \right)
\frac{ \left( \frac{\gammav - \gammam}{1-\gammam} \right) - \left(\frac{\gammav - \gammam}{1-\gammam}\right)^{\!H+1}}{ \frac{1-\gammav}{1-\gammam} } \\
&= 1 -  \left( \frac{1-\gammam}{\gammav-\gammam} \right) \left(
\left( \frac{\gammav - \gammam}{1-\gammam} \right) - \left(\frac{\gammav - \gammam}{1-\gammam}\right)^{\!H+1} \right) \\
&= \left(\frac{\gammav - \gammam}{1-\gammam}\right)^{\!H}
\end{align*}
\end{proof}

\vspace{-.3cm}
We now proceed to the $\gamma$-MVE estimator itself.

\vspace{.2cm}
{
\newcommand*\circled[1]{\tikz[baseline=(char.base)]{
            \node[shape=circle,draw,inner sep=2pt] (char) {#1};}}
\begin{thm}
\label{thm:gamma_mve}
For $\gammav > \gammam$, $V(\st; \gammav)$ may be decomposed as a weighted average of $H$ $\gammam$-model steps and a terminal value estimation. We denote this as the $\gamma$-MVE estimator:


\begin{equation*}
    \hat{V}_{\gamma\text{\normalfont-MVE}}(\st; \gammav) = \frac{1}{1-\gammav} \sum_{n=1}^{H} \alpha_n \expect{\se \sim \occupancy_{n}(\cdot \mid \st; \gamma)}{r(\se)}
    + \left(\frac{\gammav - \gammam}{1-\gammam}\right)^{\!H} \expect{\se \sim \occupancy_{H}(\cdot \mid \st; \gamma)}{V(\se; \gammav)}.    
\end{equation*}
\end{thm}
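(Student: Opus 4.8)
The plan is to show that $\hat{V}_{\gamma\text{-MVE}}(\st; \gammav)$ equals the true value $V(\st; \gammav)$ by starting from an infinite-sum representation of the value and isolating a tail that collapses into the terminal value term. First I would combine Theorem~\ref{thm:weights} with the ($V$-analogue of the) value relation in Equation~\ref{eq:q_estimation}. Writing $V(\st; \gammav) = \tfrac{1}{1-\gammav}\expect{\se \sim \occupancy(\cdot \mid \st; \gammav)}{r(\se)}$ and substituting the reweighting $\occupancy(\se \mid \st; \gammav) = \sum_{n=1}^{\infty} \alpha_n \occupancy_n(\se \mid \st; \gamma)$ yields
\[
V(\st; \gammav) = \frac{1}{1-\gammav}\sum_{n=1}^{\infty} \alpha_n \expect{\se \sim \occupancy_n(\cdot \mid \st; \gamma)}{r(\se)}.
\]
Splitting this sum at the rollout horizon $H$ reproduces the first (model-based) term of the estimator exactly, leaving the tail $\tfrac{1}{1-\gammav}\sum_{n=H+1}^{\infty} \alpha_n \expect{\se \sim \occupancy_n(\cdot \mid \st; \gamma)}{r(\se)}$ to be matched against the terminal value term.

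For the tail, I would reindex $n = H + m$ and exploit two facts. First, the mixture weights factor geometrically: directly from the closed form, $\alpha_{H+m} = \left(\frac{\gammav-\gammam}{1-\gammam}\right)^{\!H} \alpha_m$. Second, the $n$-step rollout distribution obeys a Chapman--Kolmogorov composition, $\occupancy_{H+m}(\se \mid \st; \gamma) = \int_{\mathcal{S}} \occupancy_m(\se \mid \bs'; \gamma)\, \occupancy_H(\bs' \mid \st; \gamma)\, \dd{\bs'}$, since each rollout step draws from the $\gamma$-model conditioned only on the previous state. Substituting both into the tail and pulling the $\occupancy_H$ integral outside gives
\[
\frac{1}{1-\gammav}\sum_{m=1}^{\infty} \alpha_{H+m} \expect{\se \sim \occupancy_{H+m}(\cdot \mid \st; \gamma)}{r(\se)} = \left(\frac{\gammav-\gammam}{1-\gammam}\right)^{\!H} \expect{\bs' \sim \occupancy_H(\cdot \mid \st; \gamma)}{\frac{1}{1-\gammav}\sum_{m=1}^{\infty}\alpha_m \expect{\se \sim \occupancy_m(\cdot \mid \bs'; \gamma)}{r(\se)}}.
\]
The inner double sum is precisely the infinite-sum representation of $V(\bs'; \gammav)$ established in the first step, so the bracket collapses to $V(\bs'; \gammav)$ and the tail becomes $\left(\frac{\gammav-\gammam}{1-\gammam}\right)^{\!H}\expect{\se \sim \occupancy_H(\cdot \mid \st; \gamma)}{V(\se; \gammav)}$, matching the terminal term and completing the decomposition.

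The main obstacle I anticipate is justifying the composition property of $\occupancy_n$ rigorously and the interchange of summation and integration needed to pull the $\occupancy_H$ factor out; these rely on the Markov structure of sequential $\gamma$-model rollouts and on absolute convergence of the series, which is guaranteed by $\gammav \in [\gammam, 1)$ keeping $\frac{\gammav-\gammam}{1-\gammam} < 1$. Lemma~\ref{lem:gamma_mve_weights} serves as the consistency check: it confirms $\sum_{n=H+1}^{\infty}\alpha_n = 1 - \sum_{n=1}^{H}\alpha_n = \left(\frac{\gammav-\gammam}{1-\gammam}\right)^{\!H}$, so that the coefficient appearing on the terminal value exactly accounts for the omitted tail mass and the estimator is an honest weighted average with weights summing to one.
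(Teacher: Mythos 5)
Your proposal is correct and follows essentially the same route as the paper's proof: rewrite $V(\st;\gammav)$ via the reweighting of Theorem~\ref{thm:weights}, split the sum at $H$, and collapse the tail using the geometric factorization $\alpha_{H+n} = \left(\tfrac{\gammav-\gammam}{1-\gammam}\right)^{\!H}\alpha_n$ together with the Markov composition of rollout steps (which the paper calls time-invariance of the $n$-step occupancy with respect to its conditioning state). Your explicit attention to the Chapman--Kolmogorov step and to absolute convergence is a welcome tightening of details the paper leaves implicit, but the argument is the same.
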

\begin{proof}
\begin{align}
V(\st; \gammav) &= \frac{1}{1-\gammav} \expect{\se \sim \occupancy(\cdot \mid \st; \gammav)}{r(\se)} \nonumber \\
&= \frac{1}{1-\gammav} \sum_{n=1}^{\infty} \alpha_n \expect{\se \sim \occupancy_{n}(\cdot \mid \st; \gamma)}{r(\se)} \nonumber \\
&= \frac{1}{1-\gammav}
\underset{\circled{1}}{\underbrace{\sum_{n=1}^{H} \alpha_n \expect{\se \sim \occupancy_{n}(\cdot \mid \st; \gamma)}{r(\se)}}} 
+ \frac{1}{1-\gammav}
\underset{\circled{2}}{\underbrace{\sum_{n=H+1}^{\infty} \alpha_n \expect{\se \sim \occupancy_{n}(\cdot \mid \st; \gamma)}{r(\se)}}}
. \label{eq:gamma_mve_decomp}
\end{align}
The second equality rewrites an expectation over a $\gammav$-model as an expectation over a rollout of a $\gamma$-model using step weights $\alpha_n$ from Theorem~\ref{thm:weights}.
We recognize $\circled{1}$ as the model-based component of the value estimation in $\gamma$-MVE.
All that remains is to write $\circled{2}$ using a terminal value function.
\begin{align}
\sum_{n=H+1}^{\infty} \alpha_n \expect{\se \sim \occupancy_{n}(\cdot \mid \st; \gamma)}{r(\se)}
&= \sum_{n=1}^{\infty} \alpha_{H+n} \expect{\se \sim \occupancy_{H+n}(\cdot \mid \st; \gammam)}{r(\se)} \nonumber \\
&= \left( \frac{\gammav-\gammam}{1-\gammam} \right)^{\!H} \expect{\bs_{H} \sim \occupancy_{H}(\cdot \mid \st; \gammam)}{\sum_{n=1}^{\infty} \alpha_{n} \expect{\se \sim \occupancy_{n}(\cdot \mid \bs_{H}; \gammam)}{r(\se)}} \nonumber \\
&= \left( \frac{\gammav-\gammam}{1-\gammam} \right)^{\!H} \expect{\bs_{H} \sim \occupancy_{H}(\cdot \mid \st; \gammam)}{\expect{\se \sim \occupancy(\cdot \mid \bs_{H}; \gammav)}{r(\se)}} \nonumber \\
&= (1-\gammav) \left( \frac{\gammav-\gammam}{1-\gammam} \right)^{\!H} \expect{\se \sim \occupancy_{H}(\cdot \mid \st; \gammam)}{V(\se; \gammav)} \label{eq:gamma_mve_value}
\end{align}
The second equality uses $\alpha_{H+n} = \left( \frac{\gammav-\gammam}{1-\gammam} \right)^{\!H} \alpha_n$ and the time-invariance of $G^{(n)}$ with respect to its conditioning state. Plugging Equation~\ref{eq:gamma_mve_value} into Equation~\ref{eq:gamma_mve_decomp} gives:
\begin{align*}
V(\st; \gammav) &=
\frac{1}{1-\gammav} \sum_{n=1}^{H} \alpha_n \expect{\se \sim \occupancy_{n}(\cdot \mid \st; \gamma)}{r(\se)}
+ \left( \frac{\gammav-\gammam}{1-\gammam} \right)^{\!H} \expect{\se \sim \occupancy_{H}(\cdot \mid \st; \gammam)}{V(\se; \gammav)}.
\end{align*}
\end{proof}

\paragraph{Remark 1.}
Using Lemma~\ref{lem:gamma_mve_weights} to substitute $1 - \sum_{n=1}^{H}\alpha_n$ in place of $\left( \frac{\gammav-\gammam}{1-\gammam} \right)^{\!H}$ clarifies the interpretation of $V(\st; \gammav)$ as a weighted average over $H$ $\gamma$-model steps and a terminal value function.
Because the mixture weights must sum to 1, it is unsurprising that the weight on the terminal value function turned out to be $\left( \frac{\gammav-\gammam}{1-\gammam} \right)^{\!H} = 1-\sum_{n=1}^{H} \alpha_n$.

\paragraph{Remark 2.}
Setting $\gamma=0$ recovers standard MVE with a single-step model, as the weights on the model steps simplify to $\alpha_n = (1-\gammav)(\gammav-\gammam)^{n-1}$ and the weight on the terminal value function simplifies to $\gammav^H$.
}

{
\setlength{\textfloatsep}{10pt}
\newcommand*{\medcup}{\mathbin{\scalebox{1.5}{\ensuremath{\cup}}}}%

\begin{algorithm}[t]
\caption{~~\textbf{$\boldsymbol{\gamma}$-model based value expansion}}
\label{alg:gamma_mve}
\begin{algorithmic}[1]
\STATE \textbf{Input} $\gammam$: model discount, $\gammav$: value discount, $\lambda:$ step size
\STATE \textbf{Initialize} $\model:$ $\gamma$-model generator
\STATE \textbf{Initialize} $Q_\omega:$ $Q$-function, $V_\xi:$ value function, $\pi_\psi:$ policy, $\mathcal{D}:$ replay buffer
\FOR{each iteration}
\FOR{each environment step}
    \STATE $\at \sim \pi_\psi(\cdot \mid \st)$
    \STATE $\stp \sim p(\cdot \mid \st, \at)$
    \STATE $\rt = r(\st, \at)$
    \STATE $\mathcal{D} \gets \mathcal{D} \ltxcup \left\{ \st, \at, \rt, \stp \right\}$
\ENDFOR
\FOR{each gradient step}
    \STATE Sample transitions $(\st, \at, \rt, \stp)$ from $\mathcal{D}$
    \STATE Update $\model$ to Algorithm~\ref{alg:practical_samples}~or~\ref{alg:practical_logp} 
    \STATE Compute $V_{\gamma-\text{MVE}}(\stp)$ according to Theorem~\ref{thm:gamma_mve}
    \STATE Update $Q$-function parameters: \\
    \hspace{1cm} $\omega \gets \omega - \lambda \grad_\omega \frac{1}{2} \left(Q_\omega(\st, \at) - \left(\rt + \gammav V_{\gamma-\text{MVE}}(\stp) \right)\right)^2$
    \STATE Update value function parameters: \\
    \hspace{1cm} $\xi \gets \xi - \lambda \grad_\xi \frac{1}{2} \left( V_\xi(\st) - \expect{\ba \sim \pi_\psi(\cdot \mid \st)}{Q_\omega(\st, \ba) - \log \pi_\psi(\ba \mid \st)} \right)^2$
    \STATE Update policy parameters: \\
    \hspace{1cm} $\psi \gets \psi - \lambda \grad_\psi \expect{\ba \sim \pi_\psi(\cdot \mid \st)}{\log \pi_\psi(\ba \mid \st) - Q_\omega(\st, \ba)}$
\ENDFOR
\ENDFOR
\end{algorithmic}
\end{algorithm}
}

\section{Implementation Details}
\label{app:implementation}

\paragraph{$\boldgamma$-MVE algorithmic description.}
The $\gamma$-MVE estimator may be used for value estimation in any actor-critic algorithm.
We describe the variant used in our control experiments, in which it is used in the soft actor critic algorithm (SAC; \citealt{haarnoja18sac}), in Algorithm~\ref{alg:gamma_mve}.
The $\gamma$-model update is unique to $\gamma$-MVE; the objectives for the value function and policy are identical to those in SAC.
The objective for the $Q$-function differs only by replacing $V(\stp)$ with $V_{\gamma-\text{MVE}}(\stp)$.
For a detailed description of how the gradients of these objectives may be estimated, and for hyperparameters related to the training of the $Q$-function, value function, and policy, we refer to \cite{haarnoja18sac}. 

\paragraph{Network architectures.}
For all GAN experiments, the $\gamma$-model generator $\model$ and discriminator $\disc$ are instantiated as two-layer MLPs with hidden dimensions of 256 and leaky ReLU activations. For all normalizing flow experiments, we use a six-layer neural spline flow \citep{durkan2019spline} with 16 knots defined in the interval $[-10, 10]$.
The rational-quadratic coupling transform uses a three-layer MLP with hidden dimensions of 256.

\paragraph{Hyperparameter settings.}
We include the hyperparameters used for training the GAN $\gamma$-model in Table~\ref{tbl:hyperparameters_gan} and the flow $\gamma$-model in Table~\ref{tbl:hyperparameters_flow}.

{
\renewcommand{\arraystretch}{1.4}
\begin{table}
\centering
\caption{GAN $\gamma$-model hyperparameters (Algorithm~\ref{alg:practical_samples}).}
\label{tbl:hyperparameters_gan}
\begin{tabular}{p{8cm}|p{1.75cm}}
\hline
    \textbf{Parameter} & \textbf{Value}\\
\hline
    Batch size & 128 \\
    Number of $\se$ samples per $(\st, \at)$ pair & 512 \\
    Delay parameter $\tau$ & $5 \cdot 10^{-3}$ \\
    Step size $\lambda$ & $1 \cdot 10^{-4}$ \\
    Replay buffer size (off-policy prediction experiments) & $2 \cdot 10^{5}$ \\
\hline
\end{tabular} \\
\vspace{.2cm}
\end{table}
}

{
\renewcommand{\arraystretch}{1.4}
\begin{table}
\centering
\caption{Flow $\gamma$-model hyperparameters (Algorithm~\ref{alg:practical_logp})}
\label{tbl:hyperparameters_flow}
\begin{tabular}{p{8cm}|p{1.75cm}}
\hline
    \textbf{Parameter} & \textbf{Value}\\
\hline
    Batch size & 1024 \\
    Number of $\se$ samples per $(\st, \at)$ pair & 1 \\
    Delay parameter $\tau$ & $5 \cdot 10^{-3}$ \\
    Step size $\lambda$ & $1 \cdot 10^{-4}$ \\
    Replay buffer size (off-policy prediction experiments) & $2 \cdot 10^{5}$ \\
   Single-step Gaussian variance $\sigma^2$ & $1 \cdot 10^{-2}$ \\
\hline
\end{tabular} \\
\vspace{.2cm}
\end{table}
}

We found the original GAN \citep{goodfellow2014gan} and the least-squares GAN \citep{mao2016lsgan} formulation to be equally effective for training $\gamma$-models as GANs.

\section{Environment Details}
\label{app:environments}

\textbf{Acrobot-v1} is a two-link system \citep{sutton1996generalization}. The goal is to swing the lower link above a threshold height. The eight-dimensional observation is given by $[\cos \theta_0, \sin \theta_0, \cos \theta_1, \sin \theta_1, \frac{\mathrm{d}}{\mathrm{d}t}\theta_0, \frac{\mathrm{d}}{\mathrm{d}t}\theta_1]$. We modify it to have a one-dimensional continuous action space instead of the standard three-dimensional discrete action space. We provide reward shaping in the form of $r_\text{shaped}=-\cos \theta_0 - \cos (\theta_0 + \theta_1)$.

\textbf{MountainCarContinuous-v0} is a car on a track \citep{moore1990efficient}.
The goal is to drive the car up a high too high to summit without built-up momentum.
The two-dimmensional observation space is $[x, \frac{\mathrm{d}}{\mathrm{d}t}x]$. We provide reward shaping in the form of $r_\text{shaped}=x$.

\textbf{Pendulum-v0} is a single-link system. The link starts in a random position and the goal is to swing it upright. The three-dimensional observation space is given by $[\cos\theta, \sin\theta, \frac{\mathrm{d}}{\mathrm{d}t}\theta]$.

{
\newcommand{\be}{\mathbf{e}}
\newcommand{\bg}{\mathbf{g}}
\textbf{Reacher-v2} is a two-link arm.
The objective is to move the end effector $\be$ of the arm to a randomly sampled goal position $\bg$. The 11-dimensional observation space is given by 
$[\cos\theta_0, \cos\theta_1, \sin\theta_0, \sin\theta_1, \bg_{x},\bg_{y}, \frac{\mathrm{d}}{\mathrm{d}t}\theta_0, \frac{\mathrm{d}}{\mathrm{d}t}\theta_1, \be_x-\bg_x, \be_y-\bg_y, \be_z-\bg_z]$.
}

Model-based methods often make use of shaped reward functions during model-based rollouts \citep{chua2018pets}. For fair comparison, when using shaped rewards we also make the same shaping available to model-free methods.


\section{Adversarial \texorpdfstring{$\boldsymbol{\gammam}$}{Gamma}-Model Predictions}
\label{app:gan_predictions}
\begin{figure}[H]
    \centering
    \includegraphics[width=1.0\linewidth]{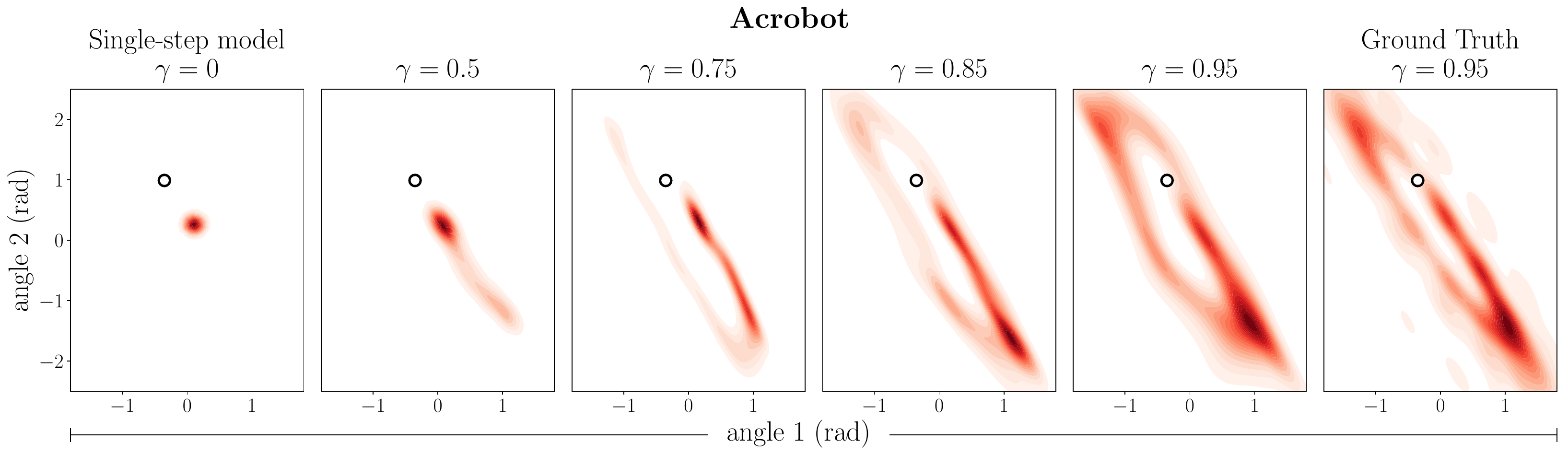} \\
    \vspace{0.2cm}
    \includegraphics[width=1.0\linewidth]{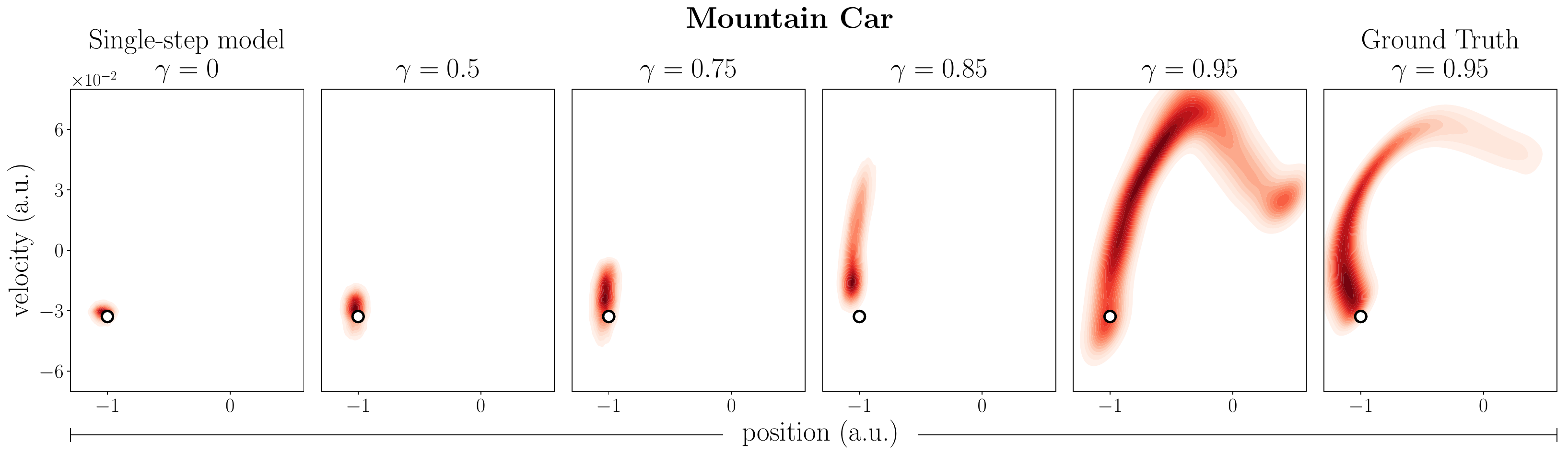}
    \caption{
    Visualization of the distribution from a single feedforward pass of $\gamma$-models trained as GANs according to Algorithm~\ref{alg:practical_samples}.
    GAN-based $\gamma$-models tend to be more unstable than normalizing flow $\gamma$-models, especially at higher discounts.
    }
    \label{fig:density_gan}
\end{figure}

\end{appendices}

\end{document}